\definecolor{lightgray}{RGB}{220,220,220}
\definecolor{duckr}{HTML}{D81B60}
\definecolor{ducko}{HTML}{FE6100}
\definecolor{ducky}{HTML}{FFB000}
\definecolor{duckg}{HTML}{009E73}
\definecolor{duckb}{HTML}{0072B2}
\definecolor{duckp}{HTML}{9F94F4}
\newcommand*\circled[1]{\tikz[baseline=(char.base)]{
            \node[shape=circle,draw,inner sep=1.0pt] (char) {#1};}}
\newtheorem{theorem}{Theorem}[section]
\newenvironment{tablealgo}[1][htb]
  {\captionsetup[table]{name= Algorithm}
   \begin{table}[#1]%
  }{\end{table}}
\newtcbox{\ducknotesmall}{breakable,enhanced jigsaw,nobeforeafter,tcbox raise base,boxrule=0.4pt,top=0mm,bottom=0mm,
right=0mm,left=4mm,arc=1pt,boxsep=2pt,before upper={\vphantom{dlg}},
colframe=Black!75!black,coltext=black,colback=Black!20,
overlay={
\begin{tcbclipinterior}
    \fill[Black!75!black] (frame.south west) rectangle ([xshift=4mm]frame.north west);
    \begin{scope}
        \randuck[xshift=0.1em, yshift=0.1em, scale=0.15];
    \end{scope}
\end{tcbclipinterior}
}}
\title{Kernel-Based Ensemble Gaussian Mixture Probability Hypothesis Density Filter}
\author{
    {Dalton Durant}\thanks{Ph.D. Candidate} \\
	Department of Aerospace Engineering \& \\ Engineering Mechanics \\
	The University of Texas at Austin \\
	Austin, TX 78712 \\
	\href{mailto:ddurant@utexas.edu}{\texttt{ddurant@utexas.edu}} \\
    \And
	{Renato Zanetti}\thanks{Associate Professor} \\
	Department of Aerospace Engineering \& \\ Engineering Mechanics \\
	The University of Texas at Austin \\
	Austin, TX 78712 \\
	\href{mailto:renato@utexas.edu}{\texttt{renato@utexas.edu}} \\
}
\begin{document}
\maketitle

\begin{center}
    Code: \url{https://github.com/daltondurant/EnGM-PHD.git}
\end{center}

\begin{abstract}In this work, a kernel-based Ensemble Gaussian Mixture Probability Hypothesis Density (EnGM-PHD) filter is presented for multi-target filtering applications. 
The EnGM-PHD filter combines the Gaussian-mixture-based techniques of the Gaussian Mixture Probability Hypothesis Density (GM-PHD) filter with the particle-based techniques of the Sequential Monte Carlo Probability Hypothesis Density (SMC-PHD) filter.
It achieves this by obtaining particles from the posterior intensity function, propagating them through the system dynamics, and then using Kernel Density Estimation (KDE) techniques to approximate the Gaussian mixture of the prior intensity function.
This approach guarantees convergence to the true intensity function in the limit of the number of components.
Moreover, in the special case of a single target with no births, deaths, clutter, and perfect detection probability, the EnGM-PHD filter reduces to the standard Ensemble Gaussian Mixture Filter (EnGMF).
In the presented experiment, the results indicate that the EnGM-PHD filter achieves better multi-target filtering performance than both the GM-PHD and SMC-PHD filters while using the same number of components or particles.
\end{abstract}

\keywords{multi-target tracking \and multi-target filtering \and state estimation \and nonlinear filtering}

\section{Introduction}
\label{sec:introduction}

The Probability Hypothesis Density (PHD) filter~\cite{ref:mahler2003} provides an efficient approximation of the intractable multi-target Bayesian filter by propagating only its first-order moment---\emph{a.k.a} the intensity function. 
The intensity function describes the expected number of targets found in a unit volume of space, serving as a compact representation of the multi-target density. 
While it has reduced complexity compared to the full multi-target filter, there are unfortunately no closed form solutions of the PHD filter for nonlinear and non-Gaussian conditions.

Ref.~\cite{ref:vo2006} developed a Gaussian mixture implementation (GM-PHD), which approximates the intensity function as a weighted sum of Gaussian components, and~\cite{ref:vo2005} developed a sequential Monte Carlo implementation (SMC-PHD), which approximates the same intensity function as a weighted mixture of Dirac delta distributions using particles.
The GM-PHD filter provides an efficient closed-form solution under mild nonlinear conditions, but breaks down under larger nonlinearities~\cite{ref:vo2006, ref:wu2023}. 
On the other hand, the SMC-PHD filter can handle larger nonlinearities, but at the cost of being more computationally expensive~\cite{ref:vo2005, ref:wu2023}. 

Recent advances in single-target tracking (STT) have introduced kernel-based methods that merge the computational efficiency of Gaussian mixture filters with the nonlinear capabilities of particle filters. 
One such method, the Ensemble Gaussian Mixture Filter (EnGMF)~\cite{ref:anderson, ref:yun2022}, uses Kernel Density Estimation (KDE) to transform a set of particles into a Gaussian mixture. 
In this approach, each particle is treated as a Gaussian component with nonzero covariance and equal weight, producing a smooth and accurate approximation of the target’s probability density function (PDF). 
Although the EnGMF has demonstrated advantages in STT, no direct extension has been developed for multi-target tracking (MTT).

In the MTT literature, kernel-based approaches have seen limited application. Ref.~\cite{ref:wu2023} introduces a distributed cardinalized SMC-PHD filter that fuses particle sets from multiple nodes using KDE, while~\cite{ref:reifler2021} applies the EnGMF as the single-target filter to the multi-target generalized labeled multi-Bernoulli (GLMB) filter.  Despite these contributions, kernel-based approaches have not been developed to approximate the intensity function directly.

This work proposes the Ensemble Gaussian Mixture Probability Hypothesis Density (EnGM-PHD) filter, which uses the KDE approach of the EnGMF to produce a smooth and accurate approximation of the intensity function. 
The EnGM-PHD filter combines the Gaussian-mixture-based computational efficiency of the GM-PHD filter with the particle-based nonlinear capabilities of the SMC-PHD filter. 
It achieves this by obtaining particles from the posterior intensity function, propagating them through the system dynamics, and then using KDE techniques to approximate the Gaussian mixture of the prior intensity function.

The EnGM-PHD filter guarantees convergence to the true intensity function in the limit of the number of components.
Moreover, in the special case of a single target with no births, deaths, clutter, and perfect detection probability, the EnGM-PHD filter simplifies to the standard EnGMF, making the EnGM-PHD filter a straightforward MTT extension of the single-target EnGMF.

This work tests the GM-PHD, SMC-PHD, and EnGM-PHD filters in a MTT scenario with noisy measurements and considerable uncertainty in new target births. 
The scenario demonstrates the need to accurately approximate the intensity function in the presence of large measurement and model errors.
While using the same number of components or particles, the EnGM-PHD filter demonstrates better multi-target filtering performance than both the GM-PHD and the SMC-PHD filters, indicating a more accurate approximation of the intensity function.

The remainder of this work is structured as follows: Section~\ref{sec:background} provides background to the PHD, GM-PHD, and SMC-PHD filters. 
Section~\ref{sec:methodology} details the methodology for the EnGM-PHD filter.
Section~\ref{sec:results} compares the results of the GM-PHD, SMC-PHD, and EnGM-PHD filters in a simulated MTT scenario. 
Finally, Section~\ref{sec:conclusion} gives a brief discussion with concluding remarks.

\section{Background and Motivation}
\label{sec:background}

\subsection{Probability Hypothesis Density Filter}

The PHD filter~\cite{ref:mahler2003} is designed to provide an efficient approximation of the intractable multi-target Bayesian filter.
It is based on Random Finite Set (RFS) theory and avoids the combinatorial complexities of data association.
It is able to handle target survivals, spawns, births, and deaths, all while estimating the time-varying number of targets---in real time.

The PHD filter accomplishes these features by only propagating the first-order moment of the full multi-target density, \emph{a.k.a.} the intensity function. 
The intensity function, denoted by $v(\mathbf{x})$, serves as a compact representation of the multi-target density and describes the expected number of targets at state $\mathbf{x} \in \mathbb{R}^{n_\mathbf{x}}$, scattered randomly in space, according to a Poisson distribution. 
Its integral over any surveillance region $\mathrm{S} \subseteq \mathbb{R}^{n_\mathbf{x}}$ gives the expected number of targets $N$ in that region:
\begin{equation}
    N\ =\ \int_\mathrm{S} v(\mathbf{x}) \, \mathrm{d}\mathbf{x},
\end{equation}
where $\mathrm{d}\mathbf{x}$ is an infinitesimally small region of $\mathbf{x}$ in $\mathrm{S}$.

The PHD filter assumes:
\begin{enumerate}[label=(\roman*)]
    \setcounter{enumi}{0} 
    \item targets evolve independently and generate independent measurements,
    \item false alarms from clutter are Poisson distributed and are independent of measurements from actual targets,
    \item the predicted and updated multi-target RFS are also Poisson distributed.
\end{enumerate}

Given the above assumptions, the PHD filter's predicted intensity from time step ${k-1}$ to $k$ can be written as
\begin{equation}
    v(\mathbf{x}_k) = \int \it{\Phi}(\mathbf{x}_k, \mathbf{x}_{k-1})v(\mathbf{x}_{k-1}) \mathrm{d}\mathbf{x}_{k-1} + v_\gamma(\mathbf{x}_k),
    \label{eqn:phd-predict}
\end{equation}
and
\begin{equation}
    \begin{aligned}
        \it{\Phi}(\mathbf{x}_k, \mathbf{x}_{k-1}) 
        &= p_S(\mathbf{x}_{k-1})\phi(\mathbf{x}_k|\mathbf{x}_{k-1}) \\
        &+ v_\beta(\mathbf{x}_k|\mathbf{x}_{k-1}),
    \end{aligned}
    \label{eqn:Phi}
\end{equation}
where $p_S(\mathbf{x}_{k-1})$ is the probability of survival, $\phi(\cdot|\mathbf{x}_{k-1})$ is the state transition probability density, $v_\gamma(\cdot)$ is the birth intensity, and $v_\beta(\cdot|\mathbf{x}_{k-1})$ is the spawning intensity.

Let the measurement set at $k$ be denoted by $\mathbf{Z}_k \subset \mathbb{R}^{n_{\mathbf{z}}}$, which contains measurements and clutter. 
The intensity is then updated as
\begin{equation}
    \begin{aligned}
        v(\mathbf{x}_k | \mathbf{Z}_k) 
        &= \big ( 1 - p_D(\mathbf{x}_k) \big )v(\mathbf{x}_k) \\
        &+ \sum_{\mathbf{z} \in \mathbf{Z}_k} \frac{p_D(\mathbf{x}_k) p(\mathbf{z}|\mathbf{x}_k)v(\mathbf{x}_k)}{\kappa(\mathbf{z}) + \int p_D(\boldsymbol{\bm{\xi}}) p(\mathbf{z}|\boldsymbol{\bm{\xi}})v(\boldsymbol{\bm{\xi}})\mathrm{d}\boldsymbol{\bm{\xi}}},
    \end{aligned}
    \label{eqn:phd-update}
\end{equation}
where $p_D(\mathbf{x}_k)$ is the probability of detection, $p(\cdot|\mathbf{x}_k)$ is the likelihood of a single-target measurement given $\mathbf{x}_k$, and $\kappa(\cdot)$ is the clutter intensity. 
The clutter intensity can be further defined as ${\kappa(\cdot) \coloneqq \lambda_C p_C(\cdot)}$, where $\lambda_C$ is the Poisson average rate of uniform clutter per scan, and $p_C(\cdot)$ is the clutter density.

While the PHD filter has reduced complexity compared to the full multi-target filter, there are unfortunately no closed form solutions to the integrals of the intensity function for nonlinear and non-Gaussian conditions. 
Therefore,~\cite{ref:vo2006} developed a Gaussian mixture implementation for approximate linear Gaussian conditions, and~\cite{ref:vo2005} developed a particle implementation for more nonlinear and non-Gaussian conditions.

\subsection{Gaussian Mixture Implementation}

The GM-PHD filter~\cite{ref:vo2006} approximates the intensity function as a weighted sum of Gaussian mixture components.
The GM-PHD filter assumes (i)--(iii) and further assumes:
\begin{enumerate}[label=(\roman*)]
    \setcounter{enumi}{3} 
    \item the dynamics and measurement models are linear,
    \item the survival and detection probabilities are independent of the target state,
    \item the birth and spawning intensities are Gaussian mixtures.
\end{enumerate}

Given the assumptions (i)--(vi), the GM-PHD approximates the PHD filter's prior intensity from~\eqref{eqn:phd-predict} as
\begin{equation}
    \begin{aligned}
        &\hspace{-2em}v(\mathbf{x}_{k}) \approx \underbrace{\sum_{i=1}^{J_{k-1}} p_S w_{k-1}^{(i)} \mathcal{N}(\mathbf{x}_k; \hat{\mathbf{x}}_{S,k|k-1}^{(i)}, \hat{\mathbf{P}}_{S,k|k-1}^{(i)})}_{\text{survived components}} \\
       +&\underbrace{\sum_{i=1}^{J_{k-1}}\sum_{j=1}^{J_{\beta,k}} w_{k-1}^{(i)}w_{\beta,k}^{(j)} \mathcal{N}(\mathbf{x}_k; \hat{\mathbf{x}}_{\beta,k|k-1}^{(i,j)}, \hat{\mathbf{P}}_{\beta,k|k-1}^{(i,j)})}_{\text{spawned components}} \\
       +&\underbrace{\sum_{i=1}^{J_{\gamma,k}} w_{\gamma,k}^{(i)} \mathcal{N}(\mathbf{x}_k; \hat{\mathbf{x}}_{\gamma,k}^{(i)}, \hat{\mathbf{P}}_{\gamma,k}^{(i)})}_{\text{birthed components}} \\ 
       &\phantom{\hspace{-2em}v(\mathbf{x}_{k})} = \sum_{i=1}^{J_{k|k-1}} w_{k|k-1}^{(i)} \mathcal{N}(\mathbf{x}_k; \hat{\mathbf{x}}_{k|k-1}^{(i)}, \hat{\mathbf{P}}_{k|k-1}^{(i)}),
    \end{aligned}
    \label{eqn:gmphd-prior-intensity}
\end{equation}
where it becomes a mixture of Gaussian mixtures of the survived, spawned, and birthed components. 
The hat notation $\hat{(\cdot)}$ indicates estimates. 
For example, $\hat{\mathbf{x}}$ represents the estimated mean and $\hat{\mathbf{P}}$ the estimated covariance. 
The last line in~\eqref{eqn:gmphd-prior-intensity} is to simplify the notation, where ${J_{k|k-1} = J_{k-1}(1 + J_{\beta,k}) + J_{\gamma,k}}$ is the total number of components. 

Now, given the Gaussian mixture approximation made in~\eqref{eqn:gmphd-prior-intensity}, the GM-PHD filter approximates the posterior intensity as
\begin{equation}
    \begin{aligned}
        &\hspace{-2em}v(\mathbf{x}_k | \mathbf{Z}_k) \approx (1 - p_D) v(\mathbf{x}_k) \\
        &+ \sum_{\mathbf{z} \in \mathbf{Z}_k} \sum_{i=1}^{J_{k|k-1}} w_{k|k}^{(i)} \mathcal{N}(\mathbf{x}_k; \hat{\mathbf{x}}_{k|k}^{(i)}, \hat{\mathbf{P}}_{k|k}^{(i)}).
    \end{aligned}
    \label{eqn:gmphd-posterior-intensity}
\end{equation}
Generally, for linear models, the means and covariances are updated individually following the update equations of the Kalman Filter (KF)~\cite{ref:kalman}.

However, for nonlinear models, assumption (iv) can be relaxed, leading to two extensions: EK-PHD and UK-PHD filters~\cite{ref:vo2006}.
The EK-PHD filter uses the Extended Kalman Filter (EKF)~\cite{ref:gelb} to individually predict and update components, while the UK-PHD filter uses the Unscented Kalman Filter (UKF)~\cite{ref:julier, ref:vandermerwe}.

After the update, the number of components grow combinatorially, which the GM-PHD filter manages through a pruning, merging, and capping scheme.
Multi-target state extraction is performed by first estimating the cardinality $\hat{N}_k$ as the sum of the remaining weights, and then selecting the $\hat{N}_k$ components with the highest weights as the estimated target states. 
Alternatively, they can be chosen to be the components whose weights exceed a prescribed threshold.~\cite{ref:vo2006}

The GM-PHD filter provides a smooth and accurate approximation to the true intensity function and guarantees convergence, to some specified accuracy, using a minimum number of mixture components~\cite{ref:clark2007_gmphd-convergence}. 
However, even though the GM-PHD filter is an efficient closed-form solution to the PHD filter for mildly nonlinear and non-Gaussian conditions, its multi-target filtering performance will break down under larger nonlinearities~\cite{ref:vo2006, ref:wu2023}. 

\subsection{Particle Implementation}

The SMC-PHD filter~\cite{ref:vo2005} uses a mixture of weighted Dirac delta functions, via particles, rather than a mixture of Gaussians to accommodate more nonlinear and non-Gaussian conditions. 
The SMC-PHD filter only needs assumptions (i)--(iii) and approximates the PHD filter's prior intensity from~\eqref{eqn:phd-predict} as
\begin{equation}
    \begin{aligned}
        &\hspace{-2em}v(\mathbf{x}_{k}) \approx \underbrace{\sum_{i=1}^{J_{k-1}} p_S(\bm{\mathcal{X}}_{k-1|k-1}^{(i)}) w_{k-1}^{(i)} \delta(\mathbf{x}_k - \bm{\mathcal{X}}_{S,k|k-1}^{(i)})}_{\text{survived particles}} \\
       +&\underbrace{\sum_{i=1}^{J_{k-1}}\sum_{j=1}^{J_{\beta,k}} w_{k-1}^{(i)}w_{\beta,k}^{(j)} \delta(\mathbf{x}_k - \bm{\mathcal{X}}_{\beta,k|k-1}^{(i,j)})}_{\text{spawned particles}} \\
       +&\underbrace{\sum_{i=1}^{J_{\gamma,k}} w_{\gamma,k}^{(i)} \delta(\mathbf{x}_k - \bm{\mathcal{X}}_{\gamma,k}^{(i)})}_{\text{birthed particles}} \\
       &\phantom{\hspace{-2em}v(\mathbf{x}_{k})} = \sum_{i=1}^{J_k} w_{k|k-1}^{(i)} \delta(\mathbf{x}_k - \bm{\mathcal{X}}_k^{(i)}) ,
    \end{aligned}
    \label{eqn:smcphd-prior-intensity}
\end{equation}
where $\delta(\cdot)$ represents the Dirac delta, which can be re-interpreted as the limit of a normal distribution with covariance matrix approaching zero.
The notation $\bm{\mathcal{X}}$ represents particles that are assumed independent and identically distributed (i.i.d.).

Given the Dirac mixture approximation made in~\eqref{eqn:smcphd-prior-intensity}, the SMC-PHD filter approximates the posterior intensity as
\begin{equation}
    \begin{aligned}
    v(\mathbf{x}_k | \mathbf{Z}_k)\ \approx\ \sum_{i=1}^{J_k}w_{k|k}^{(i)}\delta(\mathbf{x}_k-\bm{\mathcal{X}}_k^{(i)}), \\
    \end{aligned}
    \label{eqn:smcphd-posterior-intensity}
\end{equation}
where
\begin{equation}
    \begin{aligned}
        w_{k|k}^{(i)}
        &= \big ( 1 - p_D(\bm{\mathcal{X}}_k^{(i)}) \big )w_{k|k-1}^{(i)} \\
        &+ \sum_{\mathbf{z} \in \mathbf{Z}_k} \frac{p_D(\bm{\mathcal{X}}_k^{(i)}) p(\mathbf{z}|\bm{\mathcal{X}}_k^{(i)})w_{k|k-1}^{(i)} }{\kappa(\mathbf{z}) + \sum_{j=1}^{J_k} p_D(\bm{\mathcal{X}}_k^{(j)}) p(\mathbf{z}|\bm{\mathcal{X}}_k^{(j)})w_{k|k-1}^{(j)} }.
    \end{aligned}
    \label{eqn:smcphd-posterior-weights}
\end{equation}

After the update, particles are resampled to prevent degeneracy.  
This process involves sampling from an RFS or point process.
Once $J_k$ particles are resampled, they are assigned uniform weights summing to the estimated cardinality $\hat{N}_k$.
Similar to the GM-PHD filter, multi-target state extraction is performed by first estimating cardinality $\hat{N}_k$ as the sum of the weights. 
But, unlike the GM-PHD filter, the SMC-PHD filter obtains the estimated target states by using a clustering algorithm, generally either through \textit{k}-means or expectation maximization (EM), to group the resampled particles into $\hat{N}_k$ clusters.~\cite{ref:vo2005}

Although it does not provide a smooth approximation, the SMC-PHD filter does guarantee convergence to the true posterior intensity function as its number of particles increases~\cite{ref:johansen2006}.
However, even though it can handle larger nonlinearities, the SMC-PHD filter is generally more computationally expensive than the GM-PHD filter because it requires more particles to obtain the same multi-target filtering performance~\cite{ref:wu2023}.
In addition, its method of state extraction is less efficient as well~\cite{ref:wu2023}.

It is therefore the motivation of this work to develop a filter that combines the Gaussian-mixture-based computational efficiency of the GM-PHD filter with the particle-based nonlinear capabilities of the SMC-PHD filter.

\section{Kernel-Based Ensemble Gaussian Mixture Implementation}
\label{sec:methodology}

\subsection{EnGM-PHD Filter Recursion}

The following proposed EnGM-PHD filter only requires assumptions (i)--(iii) of the PHD filter, making it applicable for nonlinear and non-Gaussian conditions.
Its basic idea is to transform the prior intensity of particles into a continuous, smooth and accurate approximation using KDE.
This continuous approximation is represented by a Gaussian mixture, which is then updated using the GM-PHD filter equations. 

Recall the Dirac mixture approximation of the prior intensity from~\eqref{eqn:smcphd-prior-intensity}:
\begin{equation}
    \begin{aligned}
        &\hspace{-2em}v(\mathbf{x}_{k}) \approx  \sum_{i=1}^{J_k} w_{k|k-1}^{(i)} \delta(\mathbf{x}_k - \bm{\mathcal{X}}_k^{(i)}).
    \end{aligned}
    \label{eqn:engmphd-prior-intensity}
\end{equation}

To convert~\eqref{eqn:engmphd-prior-intensity} into a Gaussian mixture, the KDE method from~\cite{ref:silverman} is used to transform each particle into a Gaussian component with nonzero covariance and equal weight:
\begin{gather}
    v(\mathbf{x}_k)
    \approx \sum_{i=1}^{J_k}\frac{\hat{N}_k}{J_k}\mathcal{N}\big(\mathbf{x}_{k};\hat{\mathbf{x}}_{k|k-1}^{(i)},\frac{\beta_\text{Silv.}}{\hat{N}_k} \text{Cov} (\{\hat{\mathbf{x}}_{k|k-1}^{(i)} \}_{i=1}^{J_k} )\big), 
    \label{eqn:engmphd-prior-intensity-kde}
    \\
    \hat{N}_k = \sum_{i=1}^{J_k} w_{k|k-1}^{(i)},
    \\
    \beta_\text{Silv.}\ =\ \left ( \frac{4}{n_\mathbf{x} + 2} \right )^{\frac{2}{n_\mathbf{x} + 4}} J_k^{-\frac{2}{n_\mathbf{x} + 4}},
\end{gather}
where ${\hat{\mathbf{x}}_{k|k-1}^{(i)} = \bm{\mathcal{X}}_k^{(i)}}$ is the predicted mean of the $i$-th Gaussian component, $\text{Cov} (\{\hat{\mathbf{x}}_{k|k-1}^{(i)} \}_{i=1}^{J_k} )$ is the sample covariance matrix of all the particles, and $\beta_\text{Silv.}$ is the bandwidth parameter using Silverman’s Rule of Thumb~\cite{ref:silverman}.

For a single target, if the sampling distribution is Gaussian, Silverman’s Rule of Thumb provides the optimal bandwidth parameter by minimizing the mean integrated square error (MISE) as a performance criterion~\cite{ref:silverman}.
However, for non-Gaussian distributions, it may produce conservative estimates.

A conservative estimate is often preferable because it helps maintain filter stability and prevents overconfidence, which could otherwise lead to divergence.
Furthermore, Silverman’s Rule of Thumb is relatively simple to compute, which reduces computational cost over alternative bandwidth selection algorithms.

The above approach guarantees convergence to the true intensity function in the limit of the number of components. 
It inherits the GM-PHD filter's convergence to the posterior intensity through its Gaussian mixture approximation~\cite{ref:clark2007_gmphd-convergence} and it inherits the SMC-PHD filter's convergence to the prior intensity through its Dirac mixture approximation~\cite{ref:johansen2006}.

Furthermore, when transforming the prior intensity into a Gaussian mixture, the EnGM-PHD filter maintains convergence through KDE and Silverman's Rule of Thumb.
This is because KDE and Silverman's Rule of Thumb can approximate any function arbitrarily well in the limit, producing a smoothed representation of the particle distribution, and ultimately converging to exact Bayesian inference~\cite{ref:silverman, ref:yun2022}. 

Conventionally, $\beta_\text{Silv.}$ remains unscaled in the single-target case. 
However, in the multi-target case, the bandwidth parameter must be adjusted by dividing it by the estimated cardinality $\hat{N}_k$.
In~\eqref{eqn:engmphd-prior-intensity-kde}, this scaling ensures that the covariance is appropriately applied to the estimated number of targets.

Given the Gaussian mixture approximation made in~\eqref{eqn:engmphd-prior-intensity-kde}, the EnGM-PHD filter approximates the posterior intensity as
\begin{equation}
    \begin{aligned}
        &\hspace{-2em}v(\mathbf{x}_k | \mathbf{Z}_k) \approx \big (1 - p_D \big ) v(\mathbf{x}_k) \\
        &+ \sum_{\mathbf{z} \in \mathbf{Z}_k} \sum_{i=1}^{J_k} w_{k|k}^{(i)} \mathcal{N}(\mathbf{x}_k; \hat{\mathbf{x}}_{k|k}^{(i)}, \hat{\mathbf{P}}_{k|k}^{(i)}),
    \end{aligned}
    \label{eqn:engmphd-update}
\end{equation}
and,
\begin{gather}
    w_{k|k}^{(i)}\ =\ \frac{ p_D w_{k|k-1}^{(i)} \bm{\xi}_k^{(i)}}{ \kappa(\mathbf{z})\ +\  \sum_{j=1}^{J_k}p_D w_{k|k-1}^{(j)} \bm{\xi}_k^{(j)} } ,
    \\
    \hat{\mathbf{x}}_{k|k}^{(i)}\ =\ \hat{\mathbf{x}}_{k|k-1}^{(i)}\ +\ \mathbf{K}_k^{(i)}(\mathbf{z} - h(\hat{\mathbf{x}}_{k|k-1}^{(i)})) ,
    \\
    \hat{\mathbf{P}}_{k|k}^{(i)}\ =\ \hat{\mathbf{P}}_{k|k-1}^{(i)}\ -\ \mathbf{K}_k^{(i)}\mathbf{H}_k^{(i)}\hat{\mathbf{P}}_{k|k-1}^{(i)} ,
    \\
    \mathbf{K}_k^{(i)}\ =\ \hat{\mathbf{P}}_{k|k-1}^{(i)}(\mathbf{H}_k^{(i)})^T(\mathbf{S}_{k}^{(i)})^{-1} ,
    \\
    \mathbf{S}_{k}^{(i)}\ =\  \mathbf{H}_k^{(i)}\hat{\mathbf{P}}_{k|k-1}^{(i)}(\mathbf{H}_k^{(i)})^T\ +\ \mathbf{R},
    \\
    \hat{\mathbf{P}}_{k|k-1}^{(i)} = \frac{\beta_\text{Silv.}}{\hat{N}_k} \text{Cov} (\{\hat{\mathbf{x}}_{k|k-1}^{(i)} \}_{i=1}^{J_k} ),
    \\
    \mathbf{H}_k^{(i)} = \frac{\partial h(\mathbf{x})}{\partial \mathbf{x}} \Big |_{\mathbf{x}=\hat{\mathbf{x}}_{k|k-1}^{(i)}},
    \\
    \bm{\xi}_k^{(i)}\ =\ \mathcal{N}\left ( \mathbf{z}; h(\hat{\mathbf{x}}_{k|k-1}^{(i)}), \mathbf{S}_{k}^{(i)}\right ),
\end{gather}
where $(\cdot)^T$ represents the matrix transpose and $\mathbf{R}$ is the measurement noise covariance.

After the update, the EnGM-PHD filter resamples $J_k$ i.i.d. samples from the Gaussian mixture approximation of the posterior intensity following the steps in Algorithm~\ref{tbl:gmm-sampling}.

\begin{tablealgo}[!ht]
    \vspace{-0.05em}
    \caption{Sampling from a Gaussian mixture.}
    \label{tbl:gmm-sampling}
    \centering
    \hrulefill
    \begin{algorithmic}\\
    \State \textbf{Given:}\ $\{w^{(i)}, \hat{\mathbf{x}}^{(i)}, \hat{\mathbf{P}}^{(i)} \}_{i=1}^{J}$ and requested samples $J_\text{new}$.
    \State \textbf{Check:}\ weights are normalized and sum to 1.
    \vskip 0.5em
    \State For each $j$, where $j=1,\ldots,J_\text{new}$:
    \vskip 0.25em
    \State \circled{1} Draw a uniform random variable ${\mathrm{u}^{(j)} \sim \mathcal{U}(0,1)}$.
    \vskip 0.25em
    \State \circled{2} Select the smallest index $\ell$: $\sum_{i=1}^\ell w^{(i)} \geq \mathrm{u}^{(j)}$.
    \vskip 0.25em
    \State \circled{3} Draw a sample $\bm{\mathcal{X}}^{(j)} \sim \mathcal{N}(\mathbf{x}; \hat{\mathbf{x}}^{(\ell)}, \hat{\mathbf{P}}^{(\ell)})$.
    \vskip 0.5em
    \State \textbf{Output:}\ $\{ \bm{\mathcal{X}}^{(j)} \}_{j=1}^{J_\text{new}}$
    \end{algorithmic}
    \hrulefill

\end{tablealgo}

Similar to the SMC-PHD filter, once ${J_\text{new} = J_k}$ particles have been resampled, they are assigned uniform weights such that the sum of the weights is equal to the estimated cardinality $\hat{N}_k$.
Multi-target state extraction is then performed following the same procedure as the SMC-PHD filter. 
This method of state extraction adds a layer of complexity beyond that of the GM-PHD filter, which is deemed the price to pay to extend the applicability of the GM-PHD filter to large nonlinearities.

\subsection{Ensuring Uniform Weights in the Prior Intensity}

While the proposed KDE approach seems straightforward, assuming the weights are uniform comes with some challenges. 
For example, a situation may arise where a prior intensity has the form:
\begin{equation}
    \begin{aligned}
        &\hspace{-2em}v(\mathbf{x}_{k}) \approx \underbrace{\sum_{i=1}^{J_{k-1}} p_S(\bm{\mathcal{X}}_{k-1|k-1}^{(i)}) w_{k-1}^{(i)} \delta(\mathbf{x}_k - \bm{\mathcal{X}}_{S,k|k-1}^{(i)})}_{\text{survived particles}} \\
       +&\underbrace{\sum_{i=1}^{J_{\gamma,k}} w_{\gamma,k}^{(i)} \delta(\mathbf{x}_k - \bm{\mathcal{X}}_{\gamma,k}^{(i)})}_{\text{birthed particles}}. \\
    \end{aligned}
\end{equation}
The KDE step would be to construct a Gaussian mixture for each Dirac mixture: 
\begin{equation}
    \begin{aligned}
        &\hspace{-1em}v(\mathbf{x}_{k}) \approx \\
        &\underbrace{\sum_{i=1}^{J_{k-1}} \frac{\hat{N}_{S,k}}{J_{k-1}}\mathcal{N}\big(\mathbf{x}_{k};\hat{\mathbf{x}}_{S,k|k-1}^{(i)},\frac{\beta_{S,\text{Silv.}}}{\hat{N}_{S,k}} \text{Cov} (\{\hat{\mathbf{x}}_{S,k|k-1}^{(i)} \}_{i=1}^{J_{k-1}} )\big)}_{\text{KDE constructed survived mixture}} \\
       +&\underbrace{\sum_{i=1}^{J_{\gamma,k}} \frac{\hat{N}_{\gamma,k}}{J_{\gamma,k}}\mathcal{N}\big(\mathbf{x}_{k};\hat{\mathbf{x}}_{\gamma,k}^{(i)},\frac{\beta_{\gamma,\text{Silv.}}}{\hat{N}_{\gamma,k}} \text{Cov} (\{\hat{\mathbf{x}}_{\gamma,k}^{(i)} \}_{i=1}^{J_{\gamma,k}} )\big)}_{\text{KDE constructed birthed mixture}}. \\
    \end{aligned}
\end{equation}
However, despite being transformed into Gaussian mixtures, the weights in the above equation remain non-uniform. 
This imbalance can lead to issues where the birth model overwhelms the surviving components, resulting in a poor representation of the prior intensity.

To mitigate this, i.i.d. samples are drawn from the combined distribution and used to reconstruct a single Gaussian mixture with uniform weights, ensuring a more balanced representation.
This procedure is outlined in Algorithm~\ref{tbl:dual-gmm-sampling}, resulting in the following prior intensity: 
\begin{gather}
    v(\mathbf{x}_k)
    \approx \sum_{i=1}^{J_k'}\frac{\hat{N}_k'}{J_k'}\mathcal{N}\big(\mathbf{x}_k;\hat{\mathbf{x}}_{k|k-1}'^{(i)},\frac{\beta_\text{Silv.}'}{\hat{N}_k'} \text{Cov} (\{\hat{\mathbf{x}}_{k|k-1}'^{(i)} \}_{i=1}^{J_k'} )\big), 
    \\
    \hat{N}_k' = \hat{N}_k + \hat{N}_{\gamma,k},
    \\
    \beta_\text{Silv.}'\ =\ \left ( \frac{4}{n_\mathbf{x} + 2} \right )^{\frac{2}{n_\mathbf{x} + 4}} {J_k'}^{-\frac{2}{n_\mathbf{x} + 4}},
\end{gather}
where ${J_k' = J_{k-1}+J_{\gamma,k}}$ is the total number of components and ${ \hat{\mathbf{x}}_{k|k-1}'^{(i)} = \bm{\mathcal{X}}_k'^{(i)} }$ are the new samples.
The above Gaussian mixture ensures that the prior intensity has uniform weights and is now ready for the update step.

\begin{tablealgo}[!ht]
    \vspace{1em}
    \caption{Sampling from two Gaussian mixtures.}
    \label{tbl:dual-gmm-sampling}
    \centering
    \hrulefill
    \begin{algorithmic}\\
    \State \textbf{Given:}\ $\{w_1^{(i)}, \hat{\mathbf{x}}_1^{(i)}, \hat{\mathbf{P}}_1^{(i)} \}_{i=1}^{J_1}$, $\{w_2^{(i)}, \hat{\mathbf{x}}_2^{(i)}, \hat{\mathbf{P}}_2^{(i)} \}_{i=1}^{J_2}$, and requested samples $J_\text{new}$.
    \State \textbf{Precompute:}\ $W_1 = \sum_{i=1}^{J_1} w_1^{(i)}$, $W_2 = \sum_{i=1}^{J_2} w_2^{(i)}$, and $Z = W_1 + W_2$.
    \vskip 0.5em
    \State For each $j$, where $j=1,\ldots,J_\text{new}$:
    \vskip 0.25em
    \State \circled{1} Draw a uniform random variable ${\mathrm{u}^{(j)} \sim \mathcal{U}(0,1)}$.
    \vskip 0.25em
    \State \circled{2} If $\mathrm{u}^{(j)} < \frac{W_1}{Z}$, then select first mixture. Otherwise, select the second.
    \vskip 0.25em
    \State \circled{3} Within the selected mixture, use Algorithm~\ref{tbl:gmm-sampling} to draw a sample $\bm{\mathcal{X}}^{(j)}$.
    \vskip 0.5em
    \State \textbf{Output:}\ $\{ \bm{\mathcal{X}}^{(j)} \}_{j=1}^{J_\text{new}}$
    \end{algorithmic}
    \hrulefill
    \vspace{1em}
\end{tablealgo}

\subsection{Relation to the Ensemble Gaussian Mixture Filter}

\begin{theorem}(Reduction of the EnGM-PHD filter to the EnGMF)
    In the special case of a single target with no births, deaths, clutter, and perfect detection probability, the EnGM-PHD filter equations simplify exactly to those of the standard EnGMF from~\cite{ref:yun2022}.
\end{theorem}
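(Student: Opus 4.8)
The plan is to take each equation defining the EnGM-PHD recursion, substitute the special-case parameter values, and verify term-by-term that what remains is exactly the EnGMF recursion of~\cite{ref:yun2022}. The relevant substitutions are: a single target, so the intensity integrates to one and $\hat{N}_k = 1$; no deaths, so $p_S = 1$; no births, so $v_\gamma = 0$ and $J_{\gamma,k} = 0$; no clutter, so $\kappa(\mathbf{z}) = 0$; and perfect detection, so $p_D = 1$. Because there are no births, the uniform-weight reconstruction of Algorithm~\ref{tbl:dual-gmm-sampling} is never invoked, so the prior intensity is simply the KDE of the propagated survival particles, which keeps the bookkeeping clean.

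First I would handle the prior. With $p_S = 1$ and no birth term, equation~\eqref{eqn:engmphd-prior-intensity} collapses to a single Dirac sum over the propagated particles, and its KDE form~\eqref{eqn:engmphd-prior-intensity-kde} becomes a mixture of $J_k$ Gaussians. Setting $\hat{N}_k = 1$ makes the component weights $\hat{N}_k/J_k = 1/J_k$ uniform and summing to one, and collapses the bandwidth factor $\beta_\text{Silv.}/\hat{N}_k$ to the unscaled $\beta_\text{Silv.}$. This is precisely the point flagged in the text that $\beta_\text{Silv.}$ remains unscaled in the single-target case, and the resulting mixture matches the EnGMF prior exactly.

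Next I would handle the update. With $p_D = 1$ the miss-detection term $(1-p_D)v(\mathbf{x}_k)$ in~\eqref{eqn:engmphd-update} vanishes. In the weight update, $\kappa(\mathbf{z}) = 0$ removes the clutter term from the denominator, and the uniform prior weights $w_{k|k-1}^{(i)} = 1/J_k$ cancel between numerator and denominator, leaving $w_{k|k}^{(i)} = \bm{\xi}_k^{(i)} / \sum_{j} \bm{\xi}_k^{(j)}$, i.e.\ normalized single-target likelihoods---exactly the EnGMF posterior weights. The per-component mean and covariance updates, together with the Kalman gain, innovation covariance, and Jacobian, are already the EKF update applied independently to each Gaussian component, which is what the EnGMF uses. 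Finally, the resampling of Algorithm~\ref{tbl:gmm-sampling} from the posterior mixture coincides with the EnGMF resampling step.

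The hard part will be justifying the cardinality normalization rigorously rather than by assertion: I must confirm that a single target forces $\hat{N}_k = 1$ in a way that simultaneously makes the component weights uniform and normalized \emph{and} recovers the unscaled Silverman bandwidth, so that both the mixture weights and the kernel covariance match the EnGMF at once. Establishing that these two normalizations are mutually consistent under $\hat{N}_k = 1$---and that no residual cardinality-dependent factor survives anywhere in the prior, update, or resampling---is the crux; once that is pinned down, the remaining matching is routine substitution.
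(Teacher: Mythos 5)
Your proposal is correct and follows essentially the same route as the paper's own proof: direct substitution of $\hat{N}_k=1$, $p_S=1$, $p_D=1$, $\kappa(\mathbf{z})=0$, and no birth terms into the prior KDE and the update equations, recovering the unscaled Silverman bandwidth, the normalized-likelihood weights, and the shared resampling step. The ``crux'' you flag at the end is handled in the paper exactly as you anticipate---by assertion from the single-target assumption---so no additional argument is needed.
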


\begin{proof}
    If there is a single target, then the estimated cardinality is ${\hat{N}=1}$, and the probability of survival is assumed to be perfect. 
    Moreover, in the absence of target births, the prior intensity reduces to a prior PDF:
    \begin{equation}
        \begin{aligned}
            v(\mathbf{x}_k)
            &\approx \sum_{i=1}^{J_k}\frac{1}{J_k}\mathcal{N}\big(\mathbf{x}_{k};\hat{\mathbf{x}}_{k|k-1}^{(i)},\beta_\text{Silv.} \text{Cov} (\{\hat{\mathbf{x}}_{k|k-1}^{(i)} \}_{i=1}^{J_k} )\big) \\
            &= p(\mathbf{x}_k).
            \label{eqn:engmf-prior}
        \end{aligned}
    \end{equation}
    Here, the bandwidth parameter $\beta_\text{Silv.}$ is given by
    \begin{equation}
        \beta_\text{Silv.}\ =\ \left ( \frac{4}{n_\mathbf{x} + 2} \right )^{\frac{2}{n_\mathbf{x} + 4}} J_k^{-\frac{2}{n_\mathbf{x} + 4}}.
    \end{equation}
    Furthermore, in the absence of deaths, clutter, and with perfect detection probability, the posterior intensity simplifies to a posterior PDF:
    \begin{equation}
        \begin{aligned}
            v(\mathbf{x}_k | \mathbf{z}_k) 
            &\approx \sum_{i=1}^{J_k} w_{k|k}^{(i)} \mathcal{N}(\mathbf{x}_k; \hat{\mathbf{x}}_{k|k}^{(i)}, \hat{\mathbf{P}}_{k|k}^{(i)}) \\
            &= p(\mathbf{x}_k | \mathbf{z}_k).
        \end{aligned}
    \end{equation}
    The corresponding update equations are
    \begin{gather}
        w_{k|k}^{(i)}\ =\ \frac{ w_{k|k-1}^{(i)} \bm{\xi}_k^{(i)}}{ \sum_{j=1}^{J_k}w_{k|k-1}^{(j)} \bm{\xi}_k^{(j)} } ,
        \\
        \hat{\mathbf{x}}_{k|k}^{(i)}\ =\ \hat{\mathbf{x}}_{k|k-1}^{(i)}\ +\ \mathbf{K}_k^{(i)}(\mathbf{z}_k - h(\hat{\mathbf{x}}_{k|k-1}^{(i)})) ,
        \\
        \hat{\mathbf{P}}_{k|k}^{(i)}\ =\ \hat{\mathbf{P}}_{k|k-1}^{(i)}\ -\ \mathbf{K}_k^{(i)}\mathbf{H}_k^{(i)}\hat{\mathbf{P}}_{k|k-1}^{(i)} ,
        \\
        \mathbf{K}_k^{(i)}\ =\ \hat{\mathbf{P}}_{k|k-1}^{(i)}(\mathbf{H}_k^{(i)})^T(\mathbf{S}_{k}^{(i)})^{-1} ,
        \\
        \mathbf{S}_{k}^{(i)}\ =\  \mathbf{H}_k^{(i)}\hat{\mathbf{P}}_{k|k-1}^{(i)}(\mathbf{H}_k^{(i)})^T\ +\ \mathbf{R},
        \\
        \hat{\mathbf{P}}_{k|k-1}^{(i)} = \beta_\text{Silv.} \text{Cov} (\{\hat{\mathbf{x}}_{k|k-1}^{(i)} \}_{i=1}^{J_k} ),
        \\
        \mathbf{H}_k^{(i)} = \frac{\partial h(\mathbf{x})}{\partial \mathbf{x}} \Big |_{\mathbf{x}=\hat{\mathbf{x}}_{k|k-1}^{(i)}},
        \\
        \bm{\xi}_k^{(i)}\ =\ \mathcal{N}\left ( \mathbf{z}; h(\hat{\mathbf{x}}_{k|k-1}^{(i)}), \mathbf{S}_{k}^{(i)}\right ).
        \label{eqn:engmf-posterior}
    \end{gather}
    Equations~\eqref{eqn:engmf-prior}--\eqref{eqn:engmf-posterior} are precisely the prior and posterior PDFs of the EnGMF. Additionally, the resampling procedure in Algorithm~\ref{tbl:gmm-sampling} \ is identical to that in the EnGMF. 
    Therefore, under these conditions, the EnGM-PHD filter recursion reduces exactly to the EnGMF.
\end{proof}

\section{Numerical Experiment}
\label{sec:results}

This experiment tests the GM-PHD, SMC-PHD, and EnGM-PHD filters in a scenario that is concerned with tracking two targets.
Both targets travel at constant velocities, and cross at the exact same point in time and space. 
They are being tracked by a radar sensor, placed at the origin, which produces noisy range, azimuth, and elevation measurements, recorded at a rate of $\mathrm{d}t$. The targets remain trackable at all times. 

\subsection{Model Parameters}

The state vector is six-dimensional:
\begin{equation}
        \mathbf{x} = [\mathbf{r}_x, \mathbf{r}_y, \mathbf{r}_z, \mathbf{v}_x, \mathbf{v}_y, \mathbf{v}_z]^T,
\end{equation}
where $\mathbf{r}$ and $\mathbf{v}$ represent the 3-dimensional Cartesian positions and velocities relative to the origin.
The motion model assumes constant velocity, integrated using a Runge-Kutta 8(7) method~\cite{ref:dormand}.  

The true initial states of the two targets are
\begin{gather}
    \mathbf{x}_{0,1} = [50,\ 50,\ 50,\ 0.5,\ 0.5,\ 2]^T, \\
    \mathbf{x}_{0,2} = [100,\ 100,\ 50,\ -0.5,\ -0.5,\ 2]^T.
\end{gather}
Whereas, the filters are initialized with a single low-weight component to favor births:
\begin{equation}
    w_{0|0}^{(1)} = \num{1d-16},\ \hat{\mathbf{x}}_{0|0}^{(1)} = \mathbf{0}_{6\times 1},\ \hat{\mathbf{P}}_{0|0}^{(1)} = \mathbf{I}_{6\times 6},
\end{equation}
where $\mathbf{0}$ represents the zero vector and $\mathbf{I}$ represents the identity matrix. 

The measurement vector ${\mathbf{z} = [\rho,\ \alpha,\ \varepsilon]^T}$ contains range $\rho$, azimuth $\alpha$, and elevation $\varepsilon$ of the observed target relative to the radar.
Measurements are corrupted by additive zero-mean Gaussian white noise with standard deviations of one\footnote{All units are dimensionless.} for range and $0.5\unit{\degree}$ for both azimuth and elevation. 
Sampling occurs at ${\mathrm{d}t=1}$, starting from 0 and ending at 100.

Let the clutter generated state be $\mathbf{r}_c = [\mathbf{r}_{x,c}, \mathbf{r}_{y,c}, \mathbf{r}_{z,c}]^T$, where it is Poisson distributed in the surveillance region:
\begin{equation}
    \begin{aligned}
        \mathrm{S} = \{ (\mathbf{r}_{x,c}, \mathbf{r}_{y,c}, \mathbf{r}_{z,c} ) \mid 
        &0 \leq \mathbf{r}_{x,c} \leq 200, \; 
        0 \leq \mathbf{r}_{y,c} \leq 200, \; \\
        &0 \leq \mathbf{r}_{z,c} \leq 400 \},
    \end{aligned}
\end{equation}
with an average rate of ${\lambda_{C} = 10 }$ and density ${p_{C} = \num{6.25d-8} }$. 
After being generated, clutter is mapped from the state space into measurements: ${\mathbf{z} = h(\mathbf{r}_c)}$. 

The probability of detection and survival are constant such that ${p_{D} = 0.98}$ and ${p_{S} = 0.99}$, respectively. 
This work does not perform measurement gating, please see~\cite{ref:vo2015}. 

A Gaussian birth model is centered at the crossing point with:
\begin{gather}
    \hat{\mathbf{x}}_{\gamma,k} = [75,\ 75,\ 150,\ 0,\ 0,\ 0]^T,
    \\
    \hat{\mathbf{P}}_{\gamma,k} = \text{diag}([50,\ 50,\ 50,\ 5,\ 5,\ 5]^2), 
\end{gather}
where after each prediction step, ${J_{\gamma,k} = 10}$ components are sampled with weights ${w_{\gamma,k} = 1/100}$.
This model suggests that prior knowledge is given to the system that the two targets will cross.

In this experiment, the EK-PHD filter is used to represent the GM-PHD filter, but will still be referred to as the GM-PHD filter.
Its pruning, merging, and capping parameters are $\num{1d-5}$, $4$, and ${J_{\text{max}} = 250}$, respectively. 
Although the SMC-PHD and EnGM-PHD filters retain a fixed number of components or particles, with ${J_k = 250}$, the GM-PHD filter is allowed to fluctuate its number of components because of its pruning, merging, and capping scheme. 

The SMC-PHD and EnGM-PHD filters both use \textit{k}-means clustering~\cite{ref:vo2005}.

\subsection{Multi-Target Performance Metric}

The Optimal Subpattern Assignment (OSPA)~\cite{ref:schuhmacher2008} metric evaluates tracking accuracy by measuring localization and cardinality errors between estimated states $\hat{\mathbf{x}}$ and ground truth $\mathbf{x}$. 
In this work, the OSPA error metric uses the Euclidean 2-norm, with parameters ${p=2}$ and cutoff ${c=100}$. 
Optimal assignments are performed using the Hungarian, \emph{a.k.a.} Munkres, algorithm~\cite{ref:kuhn1955, ref:munkres1962}.
More details on OSPA can be found in~\cite{ref:vo2006},~\cite{ref:wu2023},~and~\cite{ref:schuhmacher2008}.

\subsection{Results}

Fig.~\ref{fig:trajectories} visualizes the extracted position state estimates, while Fig.~\ref{fig:cardinalities} shows the extracted cardinality estimates. 
Both figures show the results of 250 Monte Carlo simulations as transparent data points with a single run overlaid without transparency for emphasis. 

The GM-PHD filter struggles with the large amount of clutter and overwhelming birth model.  This is likely due to its pruning, merging, and capping scheme, which can lead to the loss of information, impacting the filter's ability to accurately approximate the true intensity function.
This is apparent in Fig.~\ref{fig:trajectories}(a) by the red cloud of extracted state estimates that fill the surveillance region. 
Supporting this reasoning, Fig.~\ref{fig:cardinalities}(a) shows it failing to estimate the correct number of targets.

The SMC-PHD filter also struggles, even with 250 particles.
Figs.~\ref{fig:trajectories}(b)~and~\ref{fig:cardinalities}(b) show that the filter fails to produce position estimates and underestimates the cardinality, indicating that the SMC-PHD filter's Dirac mixture approximation is insufficient. 

The EnGM-PHD filter has the best performance. 
It shows to favorably track the two targets in Fig.~\ref{fig:trajectories}(c) with minimal outliers and consistent cardinality estimates in Fig.~\ref{fig:cardinalities}(c), indicating better approximation of the intensity function.

\begin{figure*}[!ht]
    \begin{subfigure}[t]{0.3333\linewidth}
        \centering
        \includegraphics{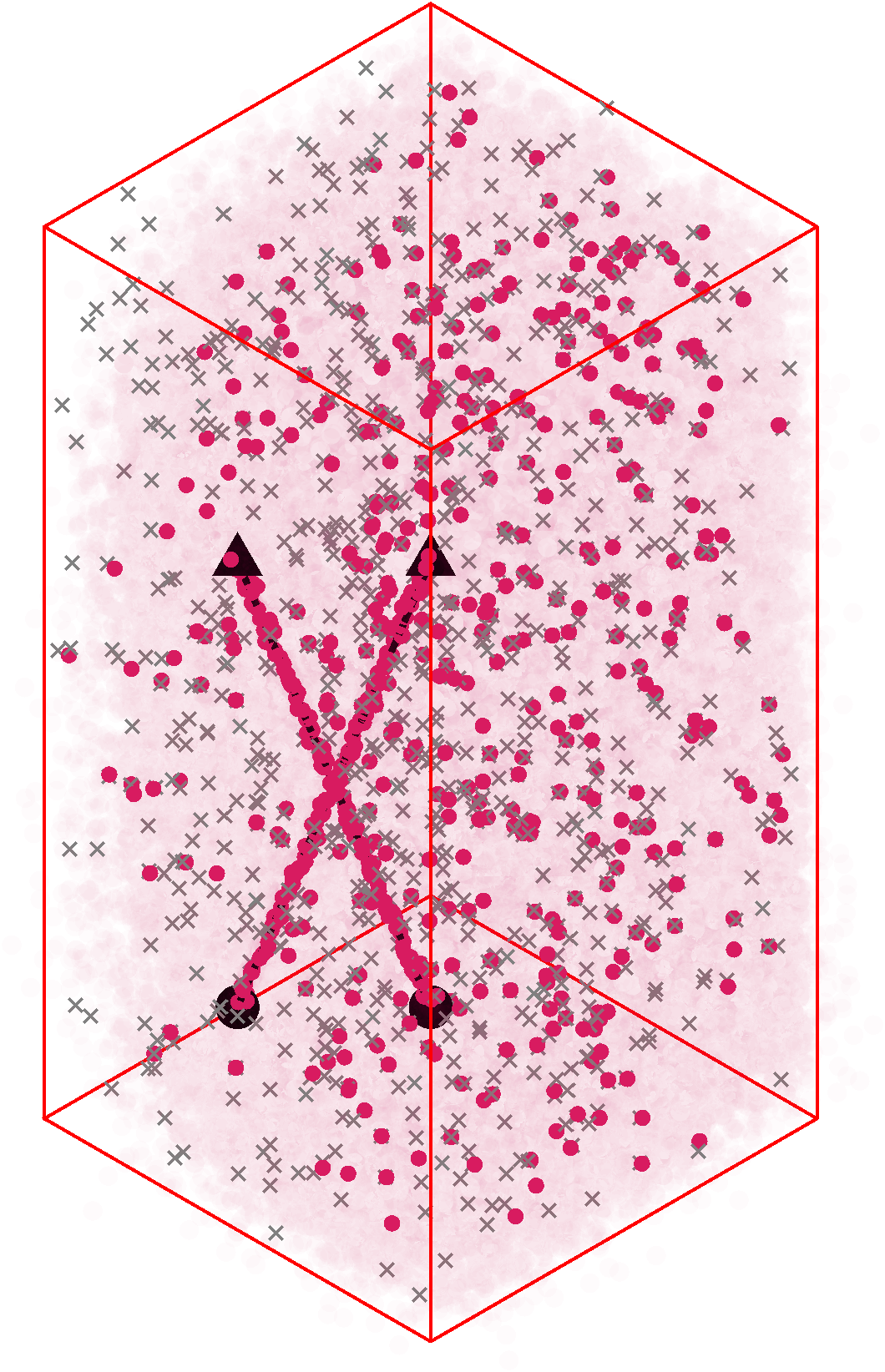}
        \caption{GM-PHD}
    \end{subfigure}%
    ~ 
    \begin{subfigure}[t]{0.3333\linewidth}
        \centering
        \includegraphics{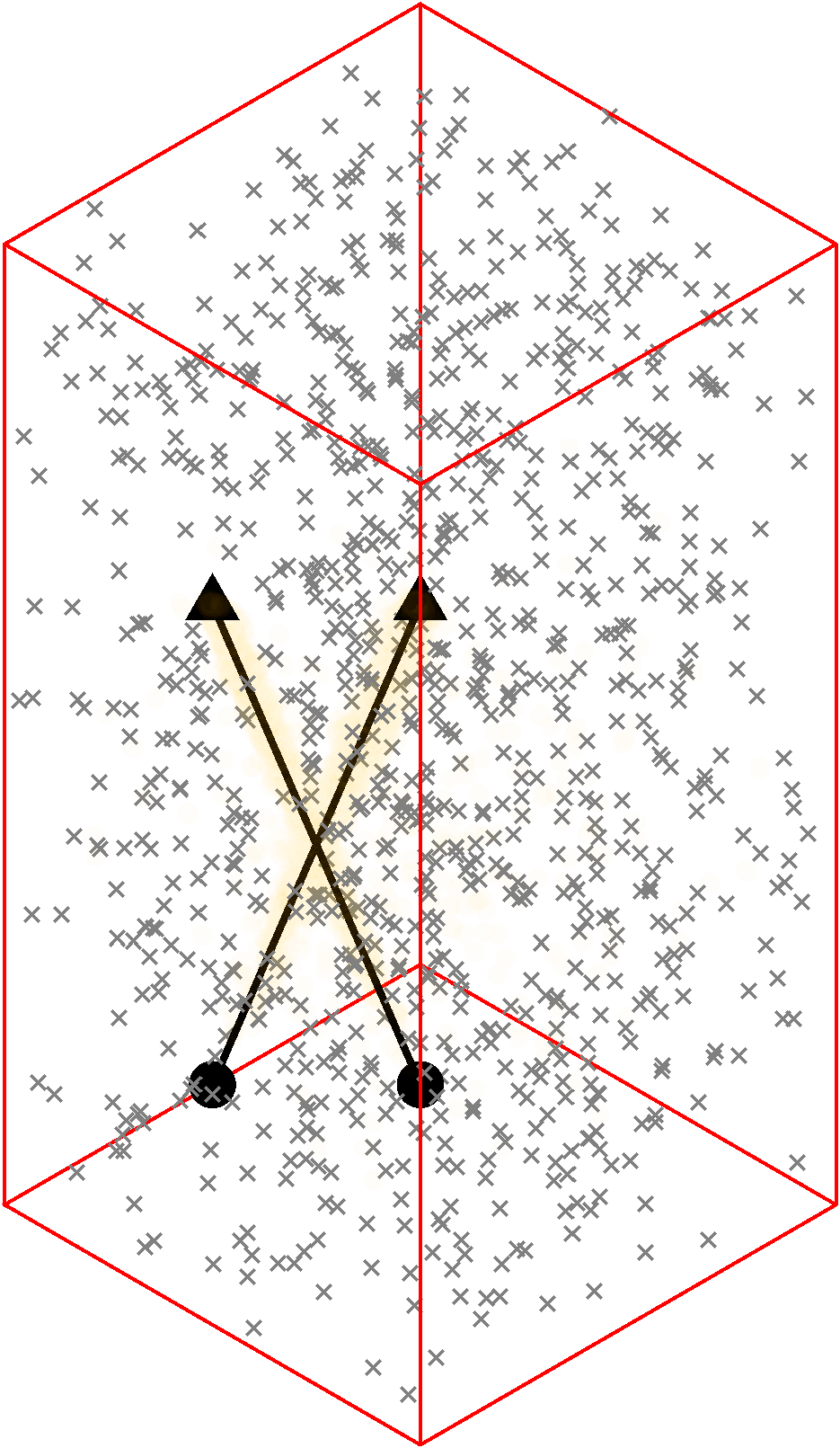}
        \caption{SMC-PHD}
    \end{subfigure}
    ~ 
    \begin{subfigure}[t]{0.3333\linewidth}
        \centering
        \includegraphics{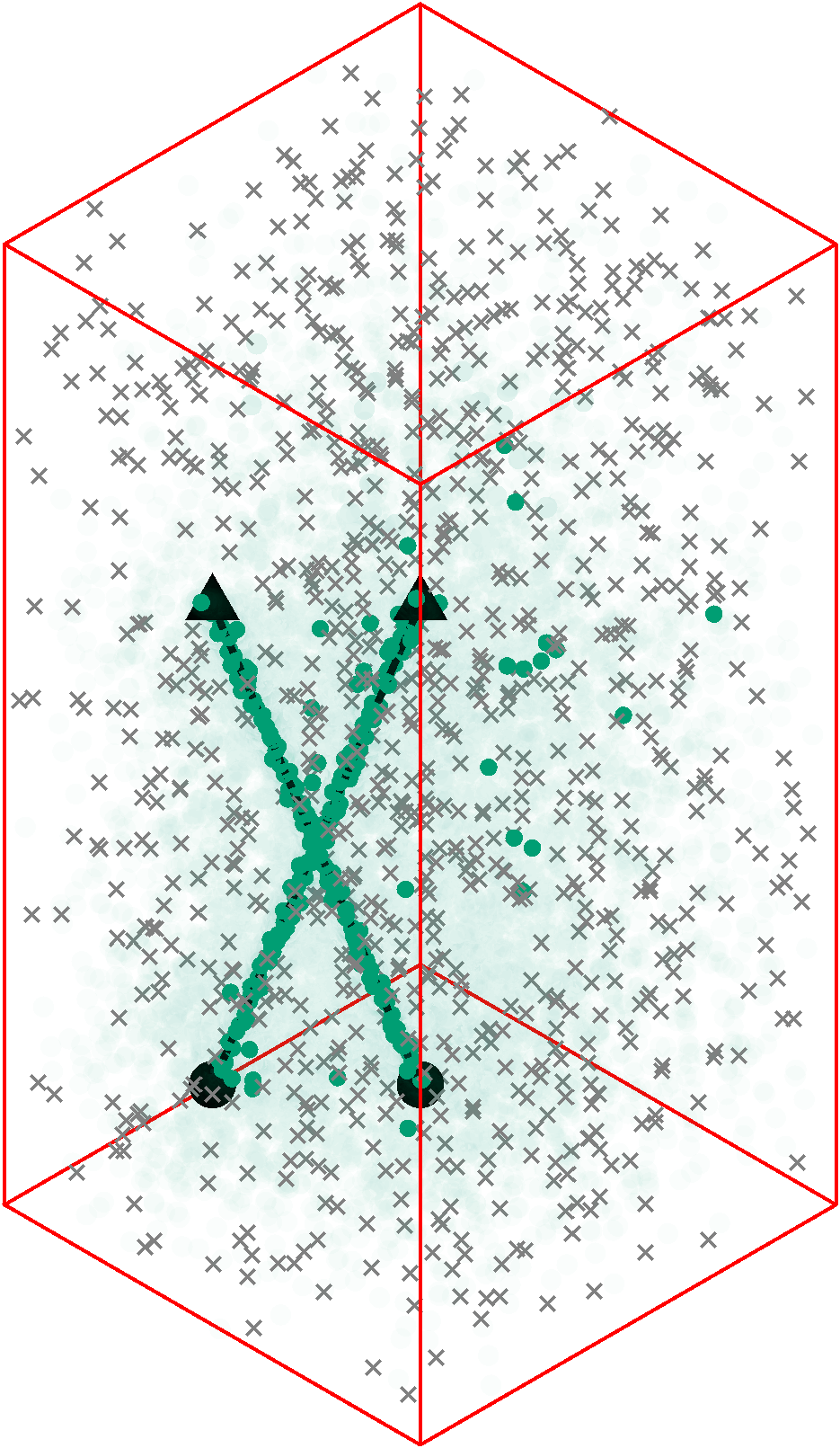}
        \caption{EnGM-PHD}
    \end{subfigure}
    \caption{This figure shows the 3-dimensional true trajectories (black) of the two targets and the extracted position state estimates of the compared filters. 
    The clutter is represented by the gray crosses which are Poisson distributed in the red rectangular surveillance region. These are the results of 250 Monte Carlo simulations, overlaid with a single run emphasized to illustrate. }
    \label{fig:trajectories}
\end{figure*}

\begin{figure*}[!ht]
    \begin{subfigure}[t]{0.3333\linewidth}
        \centering
        \includegraphics{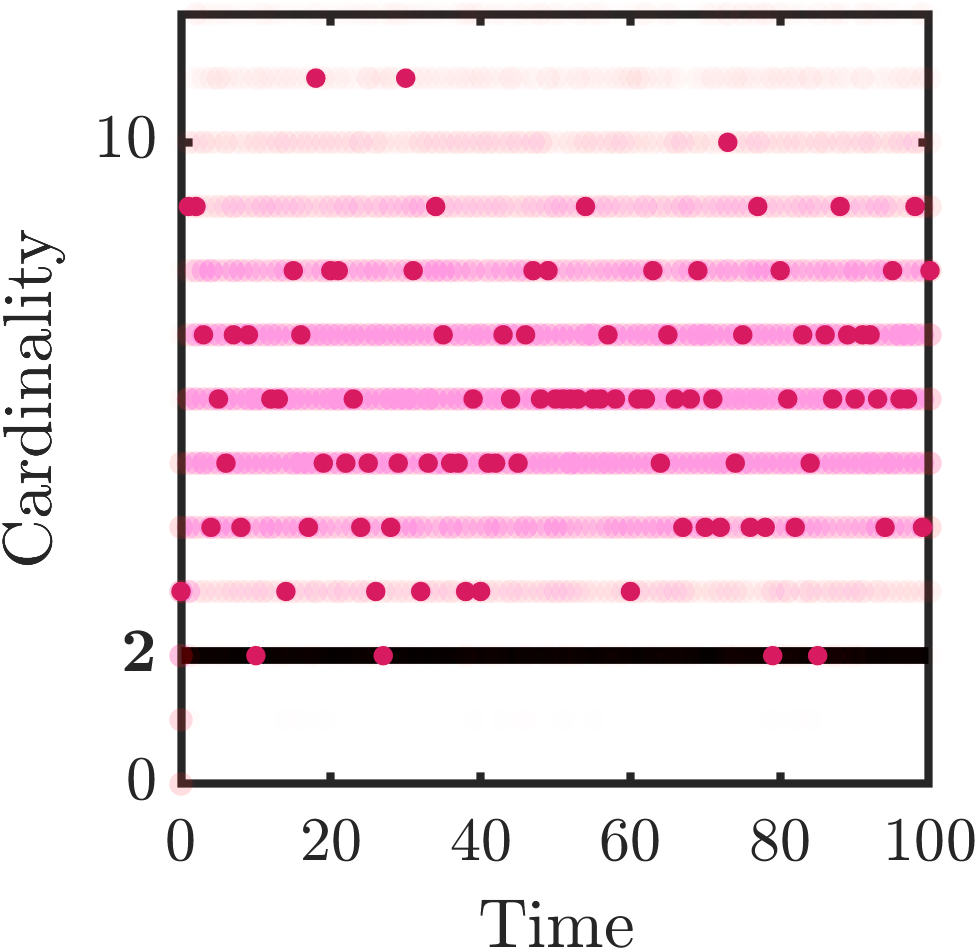}
        \caption{GM-PHD}
    \end{subfigure}%
    ~
    \begin{subfigure}[t]{0.3333\linewidth}
        \centering
        \includegraphics{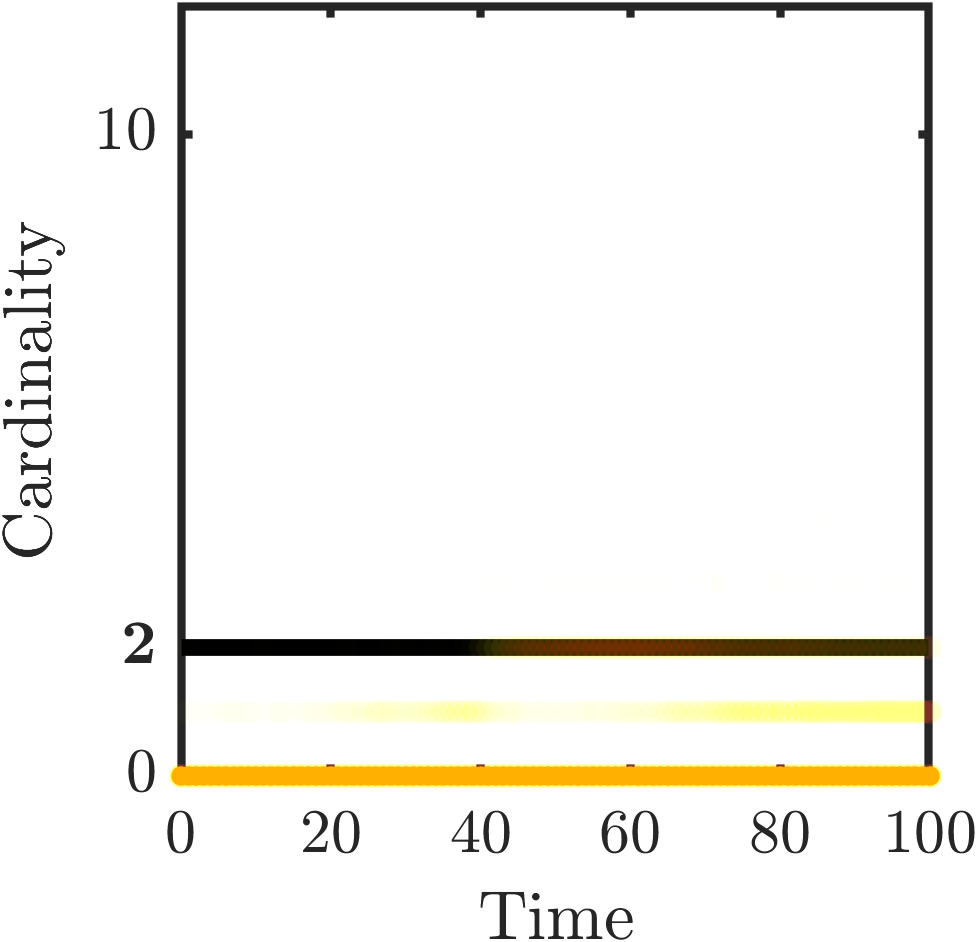}
        \caption{SMC-PHD}
    \end{subfigure}
    ~
    \begin{subfigure}[t]{0.3333\linewidth}
        \centering
        \includegraphics{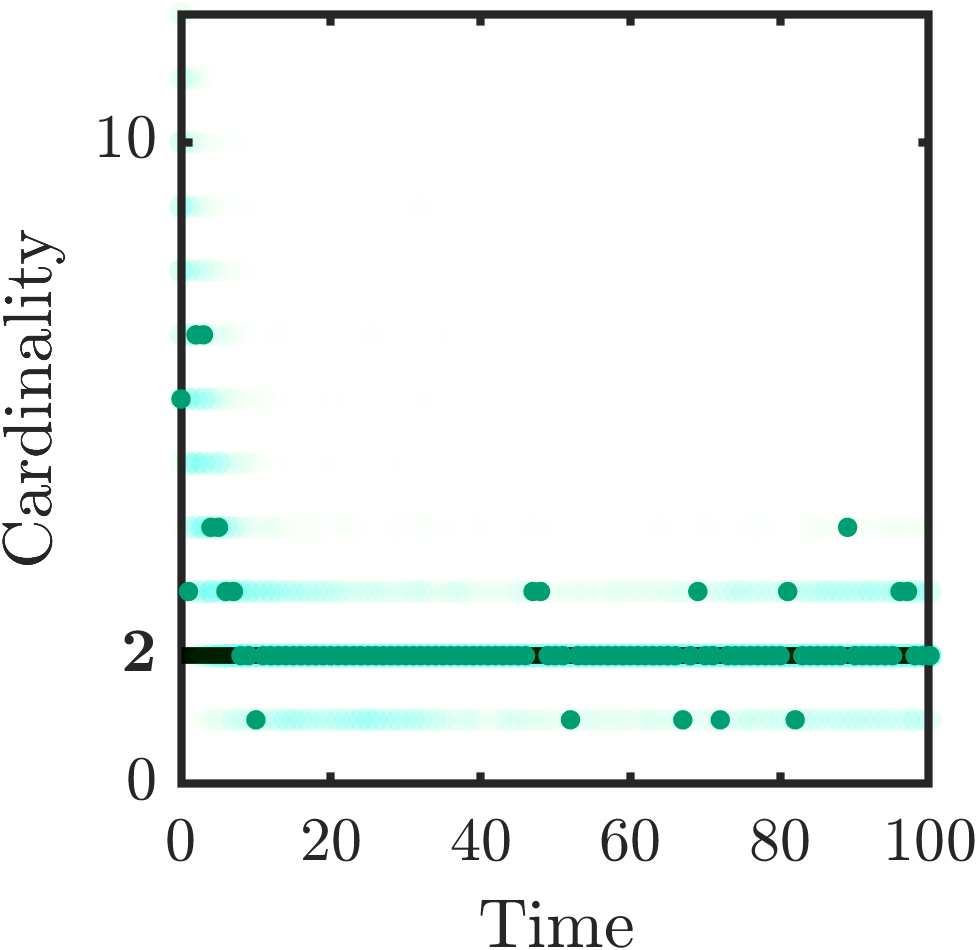}
        \caption{EnGM-PHD}
    \end{subfigure}
    \caption{This figure shows the true cardinality (black) and the extracted cardinality estimates of the compared filters vs. time. 
    These are the results of 250 Monte Carlo simulations, overlaid with a single run emphasized to illustrate. }
    \label{fig:cardinalities}
\end{figure*}

Next, Fig.~\ref{fig:ospa-vs-time} presents the OSPA accuracy of the extracted state estimates in comparison to the truth. 
Corroborating the previous reasoning, the EnGM-PHD filter shows the best overall OSPA, again indicating a better approximation of the intensity function while using the same amount of components or particles. 

\begin{figure}[!ht]
    \centering
    \includegraphics[width=0.5\linewidth]{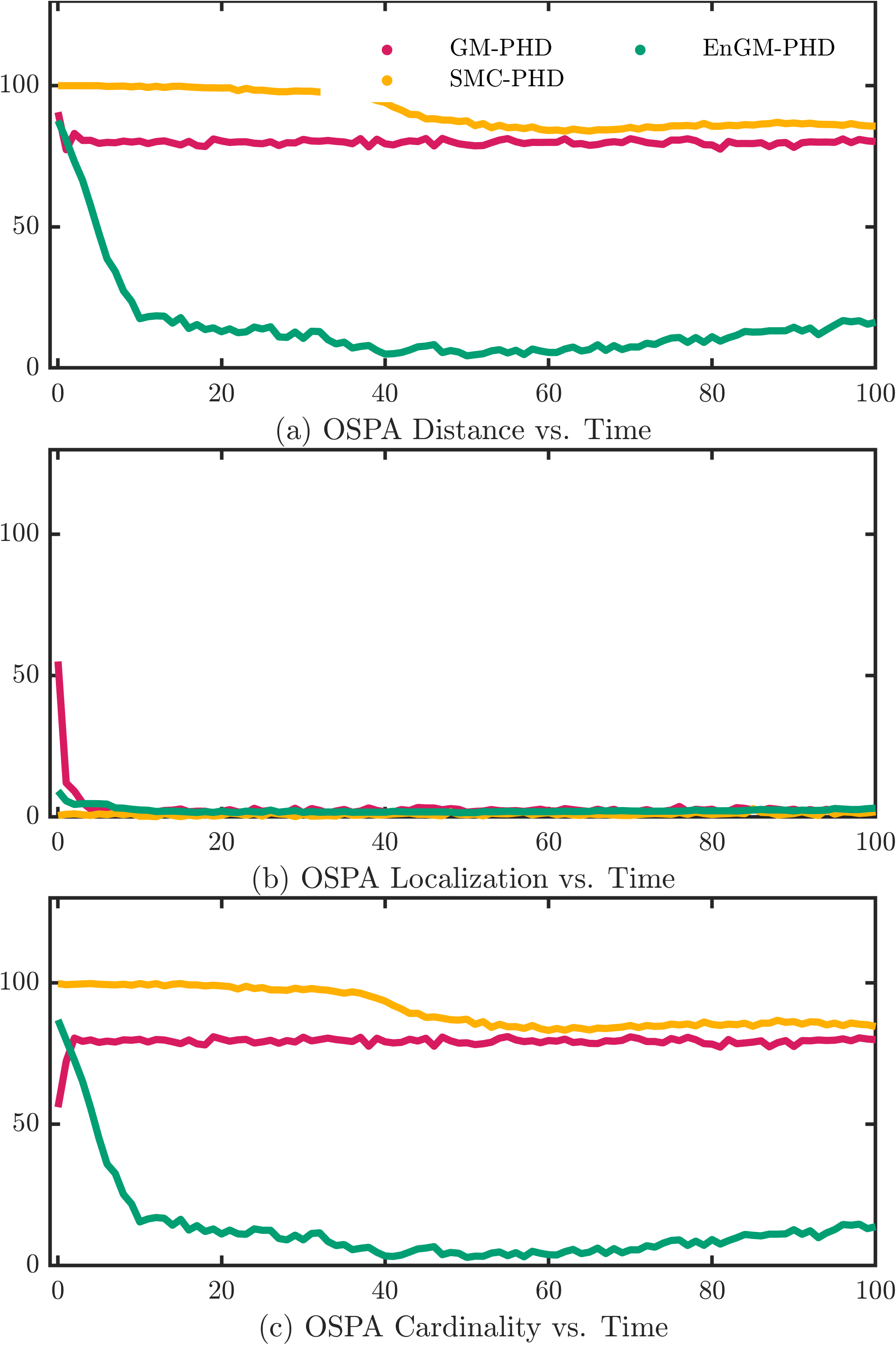}
    \caption{This figure shows the multi-target accuracy (OSPA) of each filter vs. time. 
    Results are averaged over 250 Monte Carlo simulations.}
    \label{fig:ospa-vs-time}
\end{figure}

To evaluate the algorithms relative computational costs, Fig.~\ref{fig:efficiency} compares how each filter's average number of components relate to their simulation wall-clock times.
It should be noted that while the GM-PHD filter is typically more efficient than the SMC-PHD, this is not a guarantee, suggesting that inefficiencies can arise from other processes.
Fig.~\ref{fig:efficiency} indicates this phenomena. 
This is likely due to the GM-PHD filter's merging process, which involves a combinatorial search to identify merging pairs which is computationally expensive and can make the GM-PHD filter slower than the SMC-PHD filter.

Nonetheless, the EnGM-PHD filter runs faster than the GM-PHD filter, even when using \textit{k}-means clustering, demonstrating improved efficiency.
Overall, the proposed EnGM-PHD filter achieves better multi-target filtering performance than both the GM-PHD and SMC-PHD filters while maintaining a similar number of components.

\begin{figure}[!ht]
    \centering
    \includegraphics[width=0.5\linewidth]{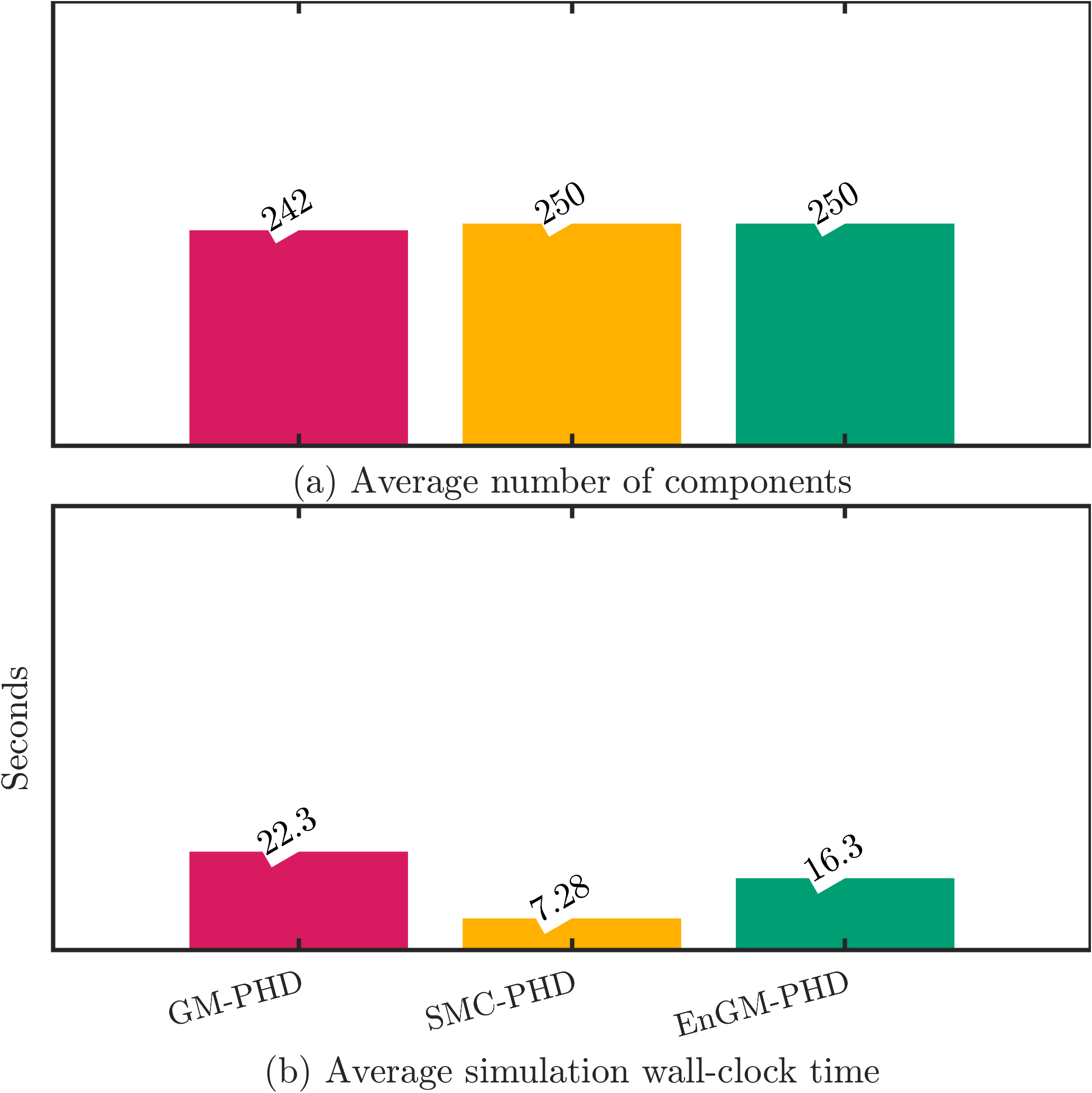}
    \caption{This figure compares the efficiency of the filters and how their number of components relate to their wall-clock times. 
    Results are averaged over 250 Monte Carlo simulations. 
    Using an Intel Core i7 9700K CPU at a base speed of 3.00 GHz and with 16 GB of RAM.}
    \label{fig:efficiency}
\end{figure}

\section{Conclusions}
\label{sec:conclusion}

This work proposes the Ensemble Gaussian Mixture Probability Hypothesis Density (EnGM-PHD) filter to produce a smooth and accurate approximation of the intensity function. 
The EnGM-PHD filter combines the Gaussian-mixture-based computational efficiency of the Gaussian Mixture Probability Hypothesis Density (GM-PHD) filter with the particle-based nonlinear capabilities of the Sequential Monte Carlo Probability Hypothesis Density (SMC-PHD) filter. 
The algorithm obtains particles from the posterior intensity function, propagates them through the system dynamics, and then uses Kernel Density Estimation (KDE) techniques to approximate the Gaussian mixture of the prior intensity function.

This approach guarantees convergence to the true intensity function in the limit of the number of components.
Moreover, in the special case of a single target with no births, deaths, clutter, and perfect detection probability, this work proves that the EnGM-PHD filter equations reduce exactly to those of the Ensemble Gaussian Mixture Filter (EnGMF), making the EnGM-PHD filter a straightforward multi-target extension of the single-target EnGMF.

Furthermore, the presented numerical experiment demonstrates better multi-target filtering performance than both the GM-PHD and the SMC-PHD filters, while using the same number of components or particles.  
The EnGM-PHD filter shows promising simulation wall-clock times, but its state extraction method adds a layer of inefficiency and is a limitation of this work.



\section*{Acknowledgments}
This material is based on research sponsored by the Air Force Office of Scientific Research (AFOSR) under agreement number FA9550-23-1-0646, \textit{Create the Future Independent Research Effort (CFIRE)}.

\bibliographystyle{unsrtnat}
\bibliography{diss}

\end{document}